\newtheorem{lemma}{Lemma}
\theoremstyle{theorem}
\theoremstyle{definition}
\theoremstyle{plain}
\theoremstyle{plain}
\newcommand{\muv}{{\mbox{\boldmath $\mu$}}}
\newcommand{\av}{{\bf{a}}}
\newcommand{\bv}{{\bf{b}}}
\newcommand{\mv}{{\bf{m}}}
\newcommand{\xv}{{\bf{x}}}
\newcommand{\zerov}{{\bf{0}}}
\newcommand{\onev}{{\bf{1}}}
\newcommand{\Ab}{\mathbf{A}}
\newcommand{\Bb}{\mathbf{B}}
\newcommand{\Gb}{\mathbf{G}}
\newcommand{\Hb}{\mathbf{H}}
\newcommand{\Ib}{\mathbf{I}}
\newcommand{\Lb}{\mathbf{L}}
\newcommand{\Mb}{\mathbf{M}}
\newcommand{\Pb}{\mathbf{P}}
\newcommand{\Rb}{\mathbf{R}}
\newcommand{\Sb}{\mathbf{S}}
\newcommand{\Ub}{\mathbf{U}}
\newcommand{\Xb}{\mathbf{X}}
\newcommand{\Yb}{\mathbf{Y}}
\newcommand{\Sigmab}{\mathbf{\Sigma}}
\newcommand{\Real}{\mathbb{R}}
\newcommand{\Id}{\mathbbm{1}}
\newcommand{\Ic}{{\cal{I}}}
\newcommand{\Nc}{{\cal{N}}}
\newcommand{\Sc}{{\cal{S}}}
\newcommand{\Pc}{{\cal P}}
\newcommand\normop[1]{\left\lVert#1\right\rVert}
\newcommand{\argminn}{\mathop{ \mathrm{argmin}}}
\newcommand{\tr}{\mathop{ \mathrm{tr}}}
\newcommand{\diag}{\mathrm{diag}}
\newcommand{\Diag}{\mathrm{Diag}}
\newcommand{\rank}{\mathrm{rank}}
\begin{document}

%
\title{Clustering of Incomplete Data via a Bipartite Graph Structure
\vspace*{-0.2cm}}

\author{\IEEEauthorblockN{Amirhossein Javaheri and~Daniel~P.~Palomar}
\IEEEauthorblockA{
\textit{Hong Kong University of Science and Technology}\\
sajavaheri@connect.ust.hk, palomar@ust.hk\vspace*{-0.4cm}}
\thanks{
} 
\vspace{-0.5cm}}

\date{}
\maketitle
\thispagestyle{plain}
\pagestyle{plain}

\begin{abstract}
There are various approaches to graph learning for data clustering, incorporating different spectral and structural constraints through diverse graph structures. Some methods rely on bipartite graph models, where nodes are divided into two classes: centers and members. These models typically require access to data for the center nodes in addition to observations from the member nodes. However, such additional data may not always be available in many practical scenarios.  
Moreover, popular Gaussian models for graph learning have demonstrated limited effectiveness in modeling data with heavy-tailed distributions, which are common in financial markets.  
In this paper, we propose a clustering method based on a bipartite graph model that addresses these challenges. First, it can infer clusters from incomplete data without requiring information about the center nodes. Second, it is designed to effectively handle heavy-tailed data. Numerical experiments using real financial data validate the efficiency of the proposed method for data clustering. 

\end{abstract}

\begin{IEEEkeywords}
Graph learning, data clustering, bipartite graph, heavy-tailed distribution, incomplete data, financial data.
\end{IEEEkeywords}

%
\IEEEpeerreviewmaketitle

\section{Introduction}
\label{sec:Intro}

\IEEEPARstart{C}{lustering}  is a fundamental technique in data mining and machine learning \cite{
oyewole_data_2023}, with applications spanning 
bio-informatics \cite{eisen_cluster_1998}
image segmentation  \cite{shi_normalized_2000}, social networks  \cite{singh_clustering_2016}, and financial markets \cite{marti_review_2021}.
Among various clustering techniques including
hierarchical \cite{johnson_hierarchical_1967}, density-based \cite{martin_density_1996} and model-based methods \cite{mclachlan_finite_2004}, there are approaches
that leverage graphical models to capture similarities between data points \cite{paratte_graph-based_2017}.
Spectral graph clustering \cite{ng_spectral_2001} is an early example of such methods, utilizing heuristic similarity graphs to represent data relationships. A more advanced approach involves learning graphs from data 
under 
a Gaussian Markov random field (GMRF) model.
The Graphical LASSO (GLASSO) \cite{friedman_sparse_2008} was a foundational method in this area, later enhanced by introducing Laplacian structural constraints to the graph learning problem \cite{lake_discovering_2010, egilmez_graph_2017, zhao_optimization_2019, ying_nonconvex_2020, javaheri_learning_2024}.

A more advanced approach in graph-based clustering incorporates spectral constraints into graph learning to infer specific structures, such as $k$-component graphs, which explicitly represent clusters \cite{nie_constrained_2016, kumar_unified_2020}. However, these methods often struggle to accurately model data with heavy-tailed distributions, which are prevalent in real-world applications like financial markets. Consequently, recent efforts have focused on generalizing $k$-component graph learning methods to handle heavy-tailed distributions \cite{cardoso2021graphical, javaheri_graph_2023, javaheri_eusipco_2024}.  
Another category of methods for clustering introduces $k$-component bipartite graph structures, comprising two types of nodes: cluster centers and their members \cite{nie_learning_2017, de_miranda_cardoso_learning_2022}. While effective, these approaches typically require additional data for the center nodes to learn the graph. This dependency on center node data, however, poses challenges in applications where such data is unavailable.  

In this paper, we propose a graph learning method for data clustering based on a bipartite graph structure. Our method is designed to handle scenarios with incomplete information about the center nodes and is robust to heavy-tailed data distributions, making it suitable for a wide range of real-world applications.

\subsection{Notations}
Vectors and  matrices are respectively denoted with bold lowercase and uppercase letters.
(e.g., $\xv$ and $\mathbf{X}$). 
The $i$-th element of a vector $\xv$ is denoted by $x_i$. Also $X_{i,j}$ denotes the $(i,j)$-th element of $\Xb$.  
The notation $\xv_{i_1:i_2}$ is defined as $\xv_{i_1:i_2} \triangleq [x_{i_1},\ldots, x_{i_2}]$.  We use $\|\mathbf{x}\|$ to denote the $\ell_2$ norm of a vector and $\|\mathbf{X}\|_F$ to denote the Frobenius norm of a matrix. Diagonal elements of  $\mathbf{X}$ are shown with $\mathrm{diag}(\mathbf{X})$, while $\mathrm{Diag}(\mathbf{x})$ is a diagonal matrix with  $\mathbf{x}$ on its main diagonal. The notation $\mathrm{det}^\ast$ also represents the generalized determinant.

\section{Problem Formulation}
\label{sec:problem}
Consider a weighted undirected graph with $p$ vertices, where each node represents an element of a signal $\xv \in \Real^p$, and the weights of the edges encode the relationships between the elements. Suppose we have $n$ measurements of $\xv$, represented as the columns of $\Xb = [\xv_1, \hdots, \xv_n] \in \Real^{p \times n}$.  
The problem we investigate in this paper is to learn an undirected $k$-component bipartite graph 
for clustering the data into $k$ clusters. 
In this graph structure, inspired by \cite{de_miranda_cardoso_learning_2022}, one class of nodes represents the centers of the clusters, while the other class represents the members (to be clustered). 
Assume we divide the nodes into $k$ centers and $r$ members, where $r + k = p$. Let $B_{i,j}\geq 0$ be the weight of the edge connecting the member $i \in \{1, \cdots, r\}$ to the center $j \in \{1, \cdots, k\}$. Then, 
the Laplacian matrix of the graph can be expressed as
\begin{align}
\label{eq:bi-Laplacian}
    \Lb =  \left[\begin{smallmatrix}
\Diag(\Bb \onev_k) & -\Bb \\
-\Bb^\top & \Diag(\Bb^\top \onev_r)
\end{smallmatrix}\right],
\end{align}
where $\Bb \in \Real_+^{r \times k}$.
The weight $B_{i,j}$ models the probability of member node $i$ being within cluster $j$,
Hence, the sum of each row of $\Bb$ 
equals one.

Next, we consider a stochastic approach to learning such graph structure from data. In specific, we assume $\xv_i$s are drawn from a zero-mean multivariate 
Student-t distribution as
\begin{align*}
    p(\xv_i)  
    \propto 
    \mathrm{det^*}(\Lb)^{1/2} \left(1+\frac{\xv_i^\top \Lb \xv_i}{\nu}\right)^{-(\nu + p)/2},\quad \nu>2.
\end{align*}

Under the above model, the problem of maximum likelihood estimation of the
graph from data  can be formulated by
\begin{equation*}
\begin{array}{cl}
\underset{\Lb,\, \Bb}{\mathsf{min}} & \hspace{-5pt}\begin{array}{ll}
\frac{p+\nu}{n} \sum_{i=1}^n \log \left(1 + \frac{\xv_i^\top\Lb\xv_i}{\nu}\right)
- \log \det^\ast(\Lb ) 
\end{array}\\
\mathsf{s.~t.} & 
\hspace{-5pt}\begin{array}[t]{l}
\Lb =  \left[\begin{smallmatrix}
\Diag(\Bb \onev_k) & -\Bb \\
-\Bb^\top & \Diag(\Bb^\top \onev_r)
\end{smallmatrix}\right], \,\, \rank(\Lb) = p-k, \\
\Bb \geq 0,\,\, \Bb \onev_k = \onev_r.
\end{array}
\end{array}
\end{equation*}

Suppose we are only given the first $r$ rows of the data matrix, corresponding to the members, which we refer to as the incomplete data denoted by $\tilde{\Xb} \in \Real^{r \times n}$. We may consider each unavailable row of the data matrix, corresponding to the centers, to be   a weighted average of the rows of $\tilde{\Xb}$ (members). Hence, the augmented data matrix yields as $\Xb = \left[\tilde{\Xb}^\top \quad \tilde{\Xb}^\top \Ab\right]^\top\in \Real^{p\times n}$, where $\Ab \in \Real_+^{r \times k}$ denotes the weight matrix
with non-negative elements. The sum of these weights for each center node equals unity, and hence, 
we have $\Ab^\top \onev_r = \onev_k$. We also assume $\Ab$ and $\Bb$ share the same support (indicating the membership sets of the clusters).  

\label{prop:1}
Consider the $i$-th column of $\Xb$ as $\xv_i^\top = [\tilde{\xv}_i^\top \,\, \tilde{\xv}_i^\top \Ab]^\top$, where $\tilde{\xv}_i$ is the $i$-th column of $\tilde{\Xb}$.  
Also let 
$\tilde{\Sb}_i = \tilde{\xv}_i \tilde{\xv}_i^\top$. Then with simple calculations we get:
    \begin{align}
    \label{eq:prop1}
    \begin{split}
        \xv_i^\top\Lb \xv_i &= h_i + \tr(\Bb \Gb_i(\Ab)),\quad  h_i= \tilde{\xv}_i^\top\tilde{\xv}_i = \tr(\tilde{\Sb}_i),\\
    \Gb_i(\Ab) &= -2 \Ab^\top \tilde{\Sb}_i + \diag \left(\Ab^\top \tilde{\Sb}_i \Ab \right) \onev_r^\top.
        \end{split}
\end{align}

Using this equation,
the problem would be restated as
\begin{equation}
\label{eq:problem_orig}
\begin{array}{cl}
    \begin{array}[t]{c}
    {\mathsf{min}}\\
    \scriptstyle \Lb ,\,\, \Bb,
    \,\, \Ab
    \end{array} & 
\hspace{-8pt}\begin{array}[t]{ll}
\frac{p+\nu}{n} \sum_{i=1}^n \log \left(1 + \frac{h_i + \tr(\Bb \Gb_i(\Ab))}{\nu}\right)
\\
- \log \det^\ast(\Lb ) 
\end{array} \\
\mathsf{s.~t.} & 
\hspace{-8pt}\begin{array}[t]{l}
\Lb =  \left[\begin{smallmatrix}
\Ib_r & -\Bb \\
-\Bb^\top & \Diag(\Bb^\top \onev_r)
\end{smallmatrix}\right]\!, \,\,\rank(\Lb) = p-k, \\
 \Bb \onev_k = \onev_r,\,\,
\Ab^\top\onev_r  = \onev_k,
\\
\Ab,\Bb \geq \zerov, {\,\, \color{black} \Ab \odot \Id(\Bb = \zerov) = \zerov,}
\end{array}
\end{array}
\end{equation}
where we have used  $\Bb\onev_k = \onev_r$ and $\Diag(\onev_r) =\Ib_r$ in the first equality constraint.

\section{Proposed Method}
\label{sec:proposed_heavy}

By relaxing the constraint $\Lb =  \left[\begin{smallmatrix}
\Ib_r & -\Bb \\
-\Bb^\top & \Diag(\Bb^\top \onev_r)
\end{smallmatrix}\right]$, 
the augmented Lagrangian function yields as 
\begin{align}
\label{eq:lagrangian}
\hspace*{-2pt}L_\rho(\Ab, \Bb, \Lb
)
&=
\frac{p+\nu}{n} \sum_{i=1}^n \log \left(1 + \frac{h_i + \tr(\Bb \Gb_i(\Ab))}{\nu}\right)
\nonumber\\
& -\log \det{\!^\ast}(\Lb )  + \frac{\rho}{2} \normop{\Lb - \left[\begin{smallmatrix}
\Ib_r & -\Bb \\
-\Bb^\top & \Diag(\Bb^\top \onev_r)
\end{smallmatrix}\right]\! }_{F}^2  
\nonumber \\
& 
+ \langle  \Lb - \left[ \begin{smallmatrix}
\Ib_r & -\Bb \\
-\Bb^\top & \Diag(\Bb^\top \onev_r)
\end{smallmatrix}\right], \Yb \rangle \nonumber\\
\end{align}
Now,  we use  a variant of the ADMM method \cite{boyd_distributed_2010} to solve  problem \eqref{eq:problem_orig} by  alternating minimization of the   augmented Lagrangian function.
Here, we have
3
update steps corresponding to the primal variables $\Lb$, $\Bb$,
and $\Ab$, 
and
one update step
for the dual 
variable $\Yb$.
These  are given as follows:

\subsection{$\Lb$ update step:} 
The update step for $\Lb$ is obtained by solving the following subproblem
\begin{align*}
    \Lb^{l+1} = \hspace{-12pt}\begin{array}[t]{c}
    {\mathsf{argmin}}\\
    \scriptstyle \Lb\succeq \zerov,\\[-2pt]
    \scriptstyle \rank(\Lb)\,=\,p-k
    \end{array}
    \hspace{-12pt}
    \begin{array}[t]{r}
      \frac{\rho}{2} \normop{\Lb - \left[\begin{smallmatrix}
\Ib_r & -\Bb^l \\
-\Bb^{l \top} & \Diag(\Bb^{l \top} \onev_r) 
\end{smallmatrix}\right] + \frac{1}{\rho}\Yb^l }_{F}^2 \\
-\log \det{\!^\ast}(\Lb )
    \end{array}
\end{align*}
The closed-form solution to this is given as
\begin{equation}
\label{eq:L_update}
\Lb^{l+1} = \frac{1}{2\rho} \Ub^l \left(\Sigmab^l + (\Sigmab^{l^2}+ 4\rho\Ib)^{1/2}\right)\Ub^{l \top},
\end{equation}
with $\Sigmab^l$ being a diagonal matrix having the largest $p-k$ eigenvalues of
$ \rho\left[\begin{smallmatrix}
\Ib_r & -\Bb^l \\
-\Bb^{l \top} & \Diag(\Bb^{l \top} \onev_r) 
\end{smallmatrix}\right] - \Yb^l$  
with corresponding eigenvectors $\Ub^l$.

\subsection{$\Bb$ update step:}
\label{sec:Gauss_B}
The subproblem associated with the update step of $\Bb$ is   formulated as follows:
\begin{equation}
\label{eq:B_subprob}
\begin{split}    
    \Bb^{l+1} &= \hspace{-8pt}
    \begin{array}[t]{c}
    {\mathsf{argmin}}\\
    \scriptstyle \Bb\geq \zerov, \,\Bb\onev_k = \onev_r
    \end{array}
    \hspace{-8pt}f_\Bb(\Bb) \\
 f_\Bb(\Bb) &=  \hspace{-3pt}
 \begin{array}[t]{l}
    \frac{p+\nu}{n} \sum_i \log \left(1 + \frac{h_i + \tr(\Bb \Gb_i(\Ab^l))}{\nu}\right)+
     \\
      \frac{\rho}{2} \normop{\Lb^{l+1} - \left[\begin{smallmatrix}
\Ib_r & -\Bb \\
-\Bb^\top & \Diag(\Bb^\top \onev_r) 
\end{smallmatrix}\right] + \frac{1}{\rho}\Yb^l }_{F}^2. 
    \end{array}
    \end{split}
\end{equation}
This problem does not admit a closed-form solution. Thus, we first   simplify the problem using the  majorization-minimization (MM) technique \cite{sun_majorization-minimization_2017}. For this, we need to
find a (smooth) majorization function $f_\Bb^S(\Bb;\Bb^l)$ with the
following properties:
\begin{align}
 \begin{split}
    f_\Bb^S(\Bb;\Bb^l) &> f_\Bb(\Bb),\quad \forall \Bb\neq \Bb^l \\
    f_\Bb^S(\Bb^l;\Bb^l) &= f_\Bb(\Bb^l),
     \end{split}
\end{align}
where   $\Bb^l$  is a constant matrix.

\begin{lemma}
\label{lemma:1}
  
The function $f_\Bb(\Bb)$ in \eqref{eq:B_subprob} can be majorized as
\begin{align}
\label{eq:b_surr_tot}
    \begin{array}[t]{ll}
   f_\Bb^S(\Bb;\Bb^l) =& \hspace{-6pt}
    \tr(\Bb\Hb^l)+
      \rho \normop{\Bb}_F^2+\frac{\rho }{2} \onev_r^\top \Bb\Bb^\top \onev_r +C(\Bb^l)
   \end{array}
\end{align}
 where 
\begin{equation*}
\begin{split}
     \Hb^l &= \Pb^l+ \rho\left(\Mb^{l \top}_{rk} +\Mb^l_{kr}  -\diag(\Mb^l)_{r+1:p}\onev_r^\top\right),
    \nonumber\\
        \Pb^l &= \frac{p+\nu}{n}   \sum_i \frac{\Gb_i(\Ab^l) }{h_i + \tr\left(\Bb^l\Gb_i(\Ab^l)\right) + \nu}, \nonumber\\
    \Mb^l &= \left[\begin{smallmatrix}
\Mb^{l}_{rr} & \Mb^{l}_{rk} \\
\Mb^{l}_{kr} & \Mb^{l}_{kk}
\end{smallmatrix}\right] = \Lb^{l+1} + \frac{1}{\rho}\Yb^{l}.
    \end{split}
\end{equation*} 

\end{lemma}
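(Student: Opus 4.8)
The plan is to split the objective as $f_\Bb=f_1+f_2$, where $f_1(\Bb)=\frac{p+\nu}{n}\sum_i\log\!\big(1+(h_i+\tr(\Bb\Gb_i(\Ab^l)))/\nu\big)$ is the Student-$t$ data term and $f_2(\Bb)$ is the quadratic augmented-Lagrangian penalty in \eqref{eq:B_subprob}; only $f_1$ needs to be majorized, while $f_2$, being already quadratic in $\Bb$, is expanded exactly. For $f_1$ I would use concavity of $t\mapsto\log(1+t)$, namely $\log(1+t)\le\log(1+t_0)+(t-t_0)/(1+t_0)$, applied termwise at $t_0=(h_i+\tr(\Bb^l\Gb_i(\Ab^l)))/\nu$. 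The $\Bb$-dependent part of the resulting bound is $\frac{p+\nu}{n}\sum_i \tr(\Bb\Gb_i(\Ab^l))/(\nu+h_i+\tr(\Bb^l\Gb_i(\Ab^l)))=\tr(\Bb\Pb^l)$ with $\Pb^l$ exactly as in the statement, and everything else is a constant depending only on $\Bb^l$; the bound is tight at $\Bb=\Bb^l$ by construction.

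For $f_2$, I would write $\Mb^l=\Lb^{l+1}+\frac1\rho\Yb^l$ in $2\times 2$ block form conforming to the $(r,k)$ partition, with blocks $\Mb^l_{rr},\Mb^l_{rk},\Mb^l_{kr},\Mb^l_{kk}$, and recall that the matrix subtracted inside the norm has blocks $\Ib_r$, $-\Bb$, $-\Bb^\top$, $\Diag(\Bb^\top\onev_r)$. The squared Frobenius norm then decomposes over the four blocks: the $(r,r)$ block is the constant $\normop{\Mb^l_{rr}-\Ib_r}_F^2$; the off-diagonal blocks give $\normop{\Mb^l_{rk}+\Bb}_F^2+\normop{\Mb^l_{kr}+\Bb^\top}_F^2$, which upon expansion, using $\langle\Mb^l_{rk},\Bb\rangle=\tr(\Bb\Mb^{l\top}_{rk})$, $\langle\Mb^l_{kr},\Bb^\top\rangle=\tr(\Bb\Mb^l_{kr})$ and $\normop{\Bb^\top}_F^2=\normop{\Bb}_F^2$, contributes (after the $\rho/2$ prefactor) the term $\rho\normop{\Bb}_F^2$ together with the linear pieces $\rho\Mb^{l\top}_{rk}$ and $\rho\Mb^l_{kr}$; the $(k,k)$ block gives $\normop{\Mb^l_{kk}-\Diag(\Bb^\top\onev_r)}_F^2=\normop{\Mb^l_{kk}}_F^2-2\,\diag(\Mb^l)_{r+1:p}^\top\Bb^\top\onev_r+\onev_r^\top\Bb\Bb^\top\onev_r$, whose linear part $-2\tr(\Bb\,\diag(\Mb^l)_{r+1:p}\onev_r^\top)$ yields the contribution $-\rho\,\diag(\Mb^l)_{r+1:p}\onev_r^\top$ and whose quadratic part yields (with the $\rho/2$ prefactor) the term $\frac{\rho}{2}\onev_r^\top\Bb\Bb^\top\onev_r$. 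Gathering the linear contributions into $\Hb^l=\Pb^l+\rho\big(\Mb^{l\top}_{rk}+\Mb^l_{kr}-\diag(\Mb^l)_{r+1:p}\onev_r^\top\big)$ and all $\Bb$-independent terms into $C(\Bb^l)$ reproduces \eqref{eq:b_surr_tot}; tightness at $\Bb=\Bb^l$ is immediate since $f_2$ is reproduced exactly and the $f_1$ majorizer is tangent there.

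The computation is essentially block-and-trace bookkeeping, so the only point requiring care is the \emph{strict} inequality $f_\Bb^S(\Bb;\Bb^l)>f_\Bb(\Bb)$ for $\Bb\neq\Bb^l$. Since $f_2$ is reproduced without slack, strictness must come from $f_1$ alone, and it holds whenever $\tr(\Bb\Gb_i(\Ab^l))\neq\tr(\Bb^l\Gb_i(\Ab^l))$ for at least one $i$, by strict concavity of $\log$; in the degenerate case where all these traces coincide the inequality becomes an equality. As the standard MM requirements are already met by the non-strict inequality together with equality at $\Bb^l$, I would state the bound in that slightly weaker form and add a remark on when strictness holds, rather than rule out the degenerate case.
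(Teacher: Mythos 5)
Your proof is correct and follows essentially the same route as the paper's: your tangent-line bound on $\log(1+t)$ is exactly the paper's $\log x \le x-1$ inequality applied at $\Bb^l$, and your blockwise expansion of the quadratic penalty reproduces the paper's inner-product computation $\langle \Mb^l, \cdot\rangle$ plus $\normop{\cdot}_F^2$, yielding the same $\Pb^l$, $\Rb$-type linear term, and hence the same $\Hb^l$. Your closing remark that the majorization is only non-strict in degenerate directions is a fair refinement; the paper's own proof likewise establishes only the non-strict bound, which suffices for MM.
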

\begin{proof}
    See Appendix \ref{App:2}.
\end{proof}

Using this lemma, the upperbounded problem can be expressed as
\begin{equation}
\label{eq:B_update_simplex}
\begin{split}  
\Bb^{l+1}
&= \underset{\Bb\geq \zerov, \,\, \Bb\onev_k = \onev_r}{\mathsf{argmin}}\,\, f_\Bb^S(\Bb;\Bb^l).
    \end{split}
\end{equation}
To solve this, we use the projected gradient descent (PGD) method \cite{boyd_convex_2004} with the the step-size $\mu$ as:
\begin{equation}
\label{eq:B_update}
\begin{split}  
    \Bb^{m+1}
    &= \Pc_{\Sc_\text{row}}\left(\Bb^m - \mu\nabla f_\Bb^S(\Bb^m;\Bb^l) \right), \\
    \nabla f_\Bb^S(\Bb^m;\Bb^l) &= \Hb^{l \top} + 2\rho \Bb^m + \rho \onev_r\onev_r^\top \Bb^m.
    \end{split}
\end{equation}
where $\mathcal{P}_{\mathcal{S}_\text{row}}$ is a matrix operator that projects each row of the input matrix onto the probability simplex $\{\xv \geq \zerov \mid \langle \xv, \onev \rangle = 1\}$. For vector inputs, the simplex projection operator $\mathcal{P}_{\mathcal{S}}$ gives the solution to:
\begin{align}
\label{eq:simplex_proj}
    \Pc_{\Sc}(\xv_0) = \argminn_{\xv\geq\zerov,\,\xv^\top\onev = 1} \frac{1}{2} \normop{\xv-\xv_0}^2
\end{align}

This problem admits a nearly closed-form solution as given in Appendix~\ref{App:3}.

\subsection{$\Ab$ update step:}
The subproblem for the update step of $\Ab$ is
\begin{equation}
\begin{array}{cl}
    \mathsf{min} & 
f_\Ab(\Ab)\\
\mathsf{s.~t.} &  \Ab\geq \zerov, \,\Ab^\top\onev_r  =\onev_k, \,\Ab\odot \Id(\Bb^{l+1}=0) = \zerov,
\end{array}
\end{equation}
where
\begin{equation}
\label{eq:A_subprob}
   \begin{split}      
    f_\Ab(\Ab) =&
     \,\,\frac{p+\nu}{n} \sum_{i=1}^n \log \left(1 + \frac{h_i + \tr(\Bb^{l+1} \Gb_i(\Ab))}{\nu}\right).\\
    \end{split}
\end{equation}
Here,  we again apply the MM  to solve the problem, due to difficulty in obtaining the closed-form solution. 
We use the following lemma to construct a majorization  for $f_\Ab(\Ab)$.

\begin{lemma}
\label{lemma:2}
 Let $\av_j$ denote the $j$-th -column of $\Ab$. Then,  for constant $\Ab^l$,   $f_\Ab(\Ab)$ in \eqref{eq:A_subprob} can be majorized via
\begin{align}
\label{eq:a_surr_tot}
    \begin{array}[t]{ll}
   f_\Ab^S(\Ab;\Ab^l) =& \hspace{-5pt}\sum_{j=1}^k g_{\av_j}^S(\av_j;\Ab^l) + C(\Ab^l),
   \end{array}
\end{align}
 where 
 \begin{align}
     \begin{split}    g_{\av_j}^S(\av_j;\Ab^l)& =\,b_j^{l+1}\av_j^\top   \tilde{\Sb}^l\av_j -2\bv_j^{l+1 \top}\tilde{\Sb}^l  \av_j,\\
         \tilde{\Sb}^l &= \frac{p+\nu}{n}\sum_i \frac{\tilde{\Sb}_i}{h_i + \tr(\Bb^{l+1}\Gb_i(\Ab^l))+\nu},
     \end{split}
 \end{align}
$b_j^{l+1} = \langle \bv^{l+1}_j, \onev\rangle$, and $\bv^{l+1}_j$ denotes the $j$-column of $\Bb^{l+1}$.
\end{lemma}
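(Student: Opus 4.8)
The plan is to majorize the objective $f_{\Ab}(\Ab)$ in two stages: first linearize the outer concave logarithm, and then discard a constant term so that only a quadratic-in-$\av_j$ remainder survives, which separates across the columns of $\Ab$. Recall that $\log(1+x)$ is concave, so for any $x, x^l$ we have the tangent bound $\log(1+x) \le \log(1+x^l) + \frac{x - x^l}{1+x^l}$. Applying this with $x = \frac{h_i + \tr(\Bb^{l+1}\Gb_i(\Ab))}{\nu}$ and $x^l$ the corresponding quantity evaluated at $\Ab^l$, and summing over $i$, upper-bounds $f_{\Ab}(\Ab)$ by an affine function of $\tr(\Bb^{l+1}\Gb_i(\Ab))$ plus a constant. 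Since $\Gb_i(\Ab) = -2\Ab^\top\tilde{\Sb}_i + \diag(\Ab^\top\tilde{\Sb}_i\Ab)\onev_r^\top$ (from \eqref{eq:prop1}), the term $\tr(\Bb^{l+1}\Gb_i(\Ab))$ is quadratic in $\Ab$, so after this first step we already have a quadratic majorizer; collecting the weights $\frac{p+\nu}{n}\,\big(h_i + \tr(\Bb^{l+1}\Gb_i(\Ab^l)) + \nu\big)^{-1}$ into the matrix $\tilde{\Sb}^l$ gives a clean expression.

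The second step is to expand $\tr\big(\Bb^{l+1}\Gb_i(\Ab)\big)$ explicitly in terms of the columns $\av_j$. Writing $\bv_j^{l+1}$ for the $j$-th column of $\Bb^{l+1}$ and $b_j^{l+1} = \langle \bv_j^{l+1}, \onev_r\rangle$, the first piece $\tr(\Bb^{l+1}(-2\Ab^\top\tilde{\Sb}_i))$ contributes $-2\sum_j \bv_j^{l+1\top}\tilde{\Sb}_i\av_j$, while the second piece $\tr\big(\Bb^{l+1}\,\diag(\Ab^\top\tilde{\Sb}_i\Ab)\onev_r^\top\big) = \onev_r^\top\Bb^{l+1}\diag(\Ab^\top\tilde{\Sb}_i\Ab) = \sum_j b_j^{l+1}\,\av_j^\top\tilde{\Sb}_i\av_j$, using $\onev_r^\top\Bb^{l+1} = $ the vector of column sums. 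Substituting the aggregated $\tilde{\Sb}^l$ for $\tilde{\Sb}_i$ (valid because the $i$-dependence now enters only linearly through $\tilde{\Sb}_i$), this yields exactly $\sum_j\big(b_j^{l+1}\av_j^\top\tilde{\Sb}^l\av_j - 2\bv_j^{l+1\top}\tilde{\Sb}^l\av_j\big) = \sum_j g_{\av_j}^S(\av_j;\Ab^l)$, plus a constant $C(\Ab^l)$ absorbing all $\Ab$-independent terms. Finally I would check the two tangency conditions: at $\Ab = \Ab^l$ the linearization of each logarithm is exact, so $f_{\Ab}^S(\Ab^l;\Ab^l) = f_{\Ab}(\Ab^l)$; and strict concavity of $\log(1+x)$ gives the strict upper bound for $\Ab \neq \Ab^l$ (at points where the argument actually differs).

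The main obstacle is purely bookkeeping rather than conceptual: correctly expanding the $\diag(\cdot)\onev_r^\top$ term inside the trace and verifying that the column sums of $\Bb^{l+1}$ are what multiply the quadratic forms $\av_j^\top\tilde{\Sb}_i\av_j$. One subtlety worth stating is that the majorization is done per-column and the support constraint $\Ab\odot\Id(\Bb^{l+1}=\zerov)=\zerov$ together with $\Ab^\top\onev_r=\onev_k$ are imposed afterward on the separated subproblems; since these constraints are convex and the coupling between columns has been eliminated, the majorizer $f_{\Ab}^S$ can be minimized column-by-column over the corresponding simplices, which is presumably how the $\Ab$ update is then carried out.
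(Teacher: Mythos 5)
Your proposal is correct and follows essentially the same route as the paper: the tangent bound $\log(1+x)\le\log(1+x^l)+\frac{x-x^l}{1+x^l}$ you use is exactly the paper's $\log(x)\le x-1$ inequality applied to the ratio $\frac{g_i(\Ab)+\nu}{g_i(\Ab^l)+\nu}$, and the subsequent column-wise expansion of $\tr(\Bb^{l+1}\Gb_i(\Ab))$ with the weighted aggregation into $\tilde{\Sb}^l$ matches the paper's Appendix D computation. The bookkeeping you flag (the $\diag(\cdot)\onev_r^\top$ term yielding the column sums $b_j^{l+1}$) is handled identically there, so no gap remains.
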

\begin{proof}
    See Appendix \ref{App:4}.
\end{proof}
Next, the update step for $\Ab$  is obtained by solving the following equation:
\begin{equation}
 \begin{array}{ccl}
   \Ab^{l+1}  =& \hspace{-5pt }\mathsf{argmin} & 
 \hspace{-5pt } f_\Ab^S(\Ab;\Ab^l)\\
& \hspace{-5pt } \mathsf{s. t.} &  \hspace{-5pt }  \Ab\geq \zerov, \,\Ab^\top\onev_r  =\onev_k, \\
& &\hspace{-5pt } \Ab\odot \Id(\Bb^{l+1}=0) = \zerov
\end{array}
\end{equation}
This problem can iteratively be solved for each   $\av_j$  via the PGD as   
\begin{align}
\label{eq:A_update}
{\av_j^{m+1}}  & = \begin{array}[t]{c}
    {\mathsf{argmin}}\\
    \scriptstyle {\av_j\geq \,\zerov, \, \,
    \av_j^\top \onev = 1
    } \\
     \scriptstyle{
     \av_j\odot\Id(\bv_j^{l+1}=0) = \,\zerov
      }
    \end{array} 
    g_{\av_j}^S(\av_j,\Ab^{l})  \\[-2pt]
& =\Pc_{\Sc}\left( \av_j^m -2\eta\,\tilde{\Sb}^l(
          b_j^{l+1}
          \av_j^m
         - \bv_j^{l+1})\right) \odot 
         \Id(\bv_j^{l+1}>0)
         \nonumber 
\end{align}
  where $\eta$ is the step-size and
  $\Pc_{\Sc}$ refers to the  projection operator that maps a vector onto the simplex $\{\xv\geq \zerov,\, \xv^\top\onev = 1 \}$ (given in Appendix \ref{App:3}).

\subsection{Dual variable update step:}
Finally we have the update step for the dual 
variable
as
\begin{align}
\label{eq:dual_update}
\begin{split}  
\Yb^{l+1}   &= \Yb^{l} +  \rho \left( \Lb^{l+1} - \left[ \begin{smallmatrix}
\Ib_r & -\Bb^{l+1} \\
-\Bb^{l+1 \top} & \Diag(\Bb^{l+1 \top} \onev_r)
\end{smallmatrix}\right] \right). \\
\end{split}
\end{align}

\begin{algorithm}[t]
\caption{Proposed algorithm 
for  bipartite $k$-component graph learning}
\vspace*{1ex}
\begin{algorithmic}[1]
\State \textbf{Input:}   \textcolor{black}{$\tilde{\Xb}\in \Real^{r\times n}$} \quad \textbf{Parameters:}   $k$, 
$\nu$, $\rho$, $\mu$, and $\eta$ 
\State \textbf{Output:} $\Bb^{l}$
\State \textbf{Initialization:} \textcolor{black}{$\Ab^{0}$, $\Bb^0 = \Pc_{\Omega_\Bb}[(\Xb^\top\Xb/n)^\dagger]_{rk}$, $l=0$}
\Repeat
\State   Update $\Lb^{l+1}$ using  \eqref{eq:L_update}.
\State  Update  $\Bb^{l+1}$  by iterating   \eqref{eq:B_update} (starting from $\Bb^l$).
\State   Update $\Ab^{l+1}$ by iterating   \eqref{eq:A_update} (starting from $\Ab^l$).
\State Update the dual variable  using \eqref{eq:dual_update}.

\State Set $l \leftarrow l+1$.
\Until {a stopping criterion is satisfied}
\end{algorithmic}
\label{algorithm:alg1}
\end{algorithm}

\begin{figure*}[t]
\centering 
\hspace{-40pt}
\begin{subfigure}[t]{0.33\textwidth}
   \includegraphics[trim=0 80 0 60,clip,width=\textwidth]
   {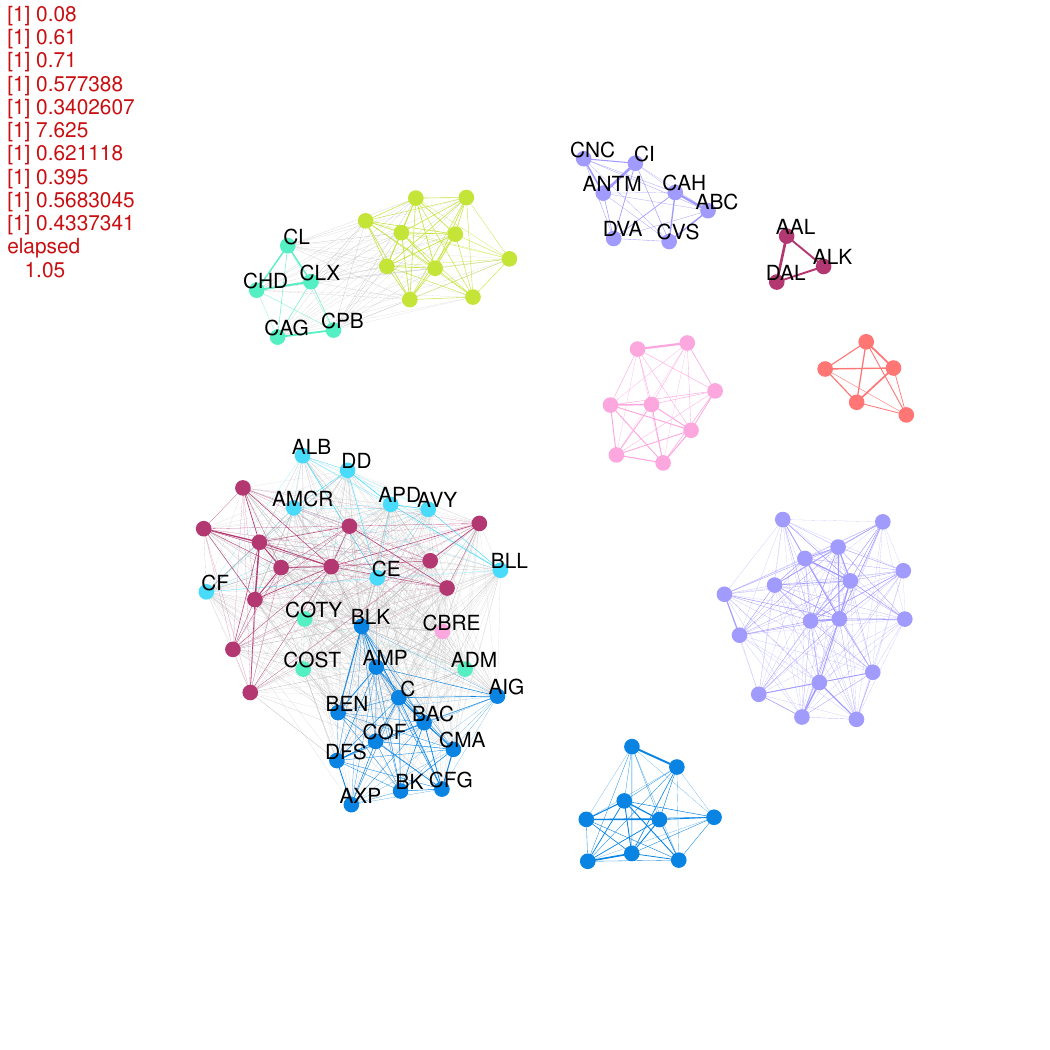}
     \caption{CLR  method \cite{nie_constrained_2016}
     }
\end{subfigure} 
\hspace{-20pt}
\begin{subfigure}[t]{0.33\textwidth}
   \includegraphics[trim=0 80 0 60,clip,width=\textwidth]
   {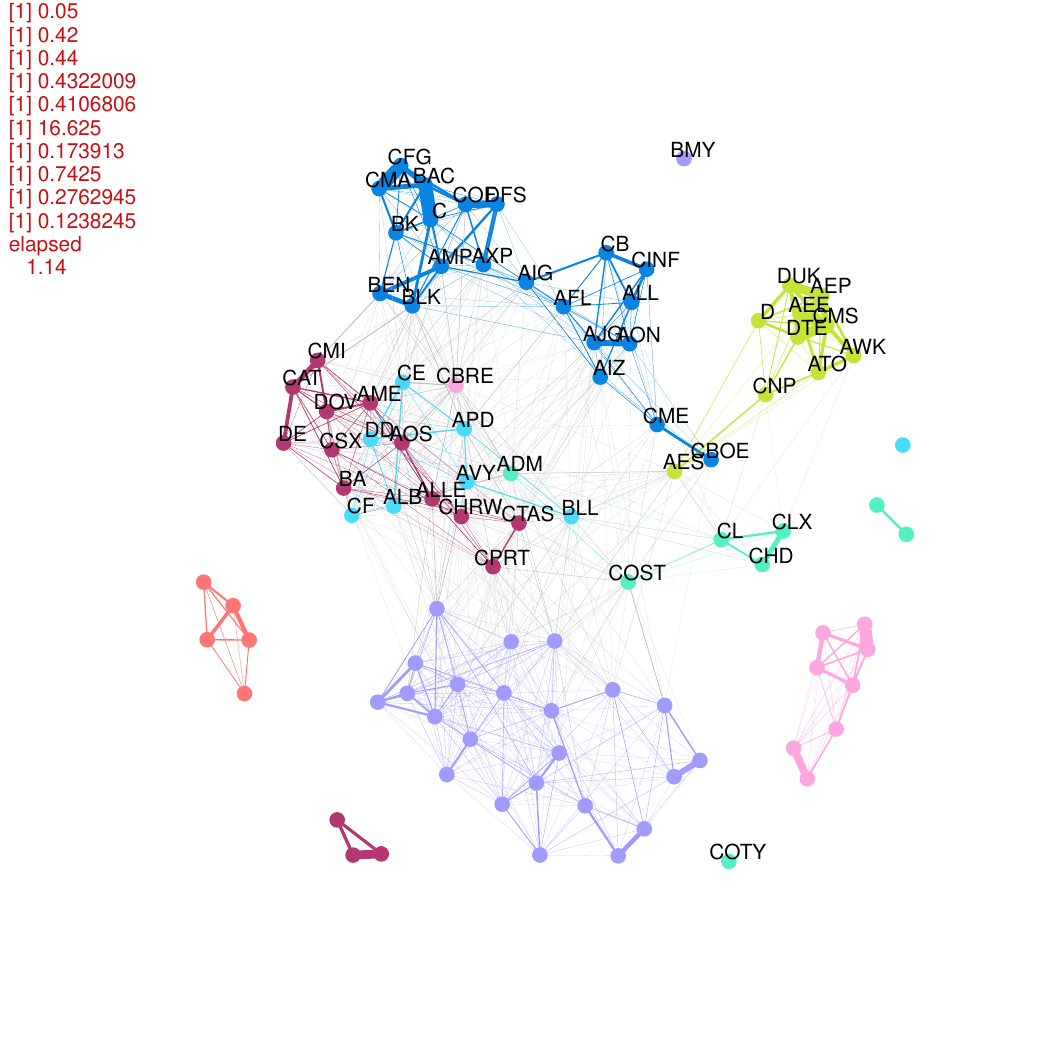}
     \caption{SGLA method    \cite{kumar_unified_2020}
     }
\end{subfigure} 
\hspace{-30pt}
\begin{subfigure}[t]{0.33\textwidth}
   \includegraphics[trim=0 80 0 60,clip,width=\textwidth]
   {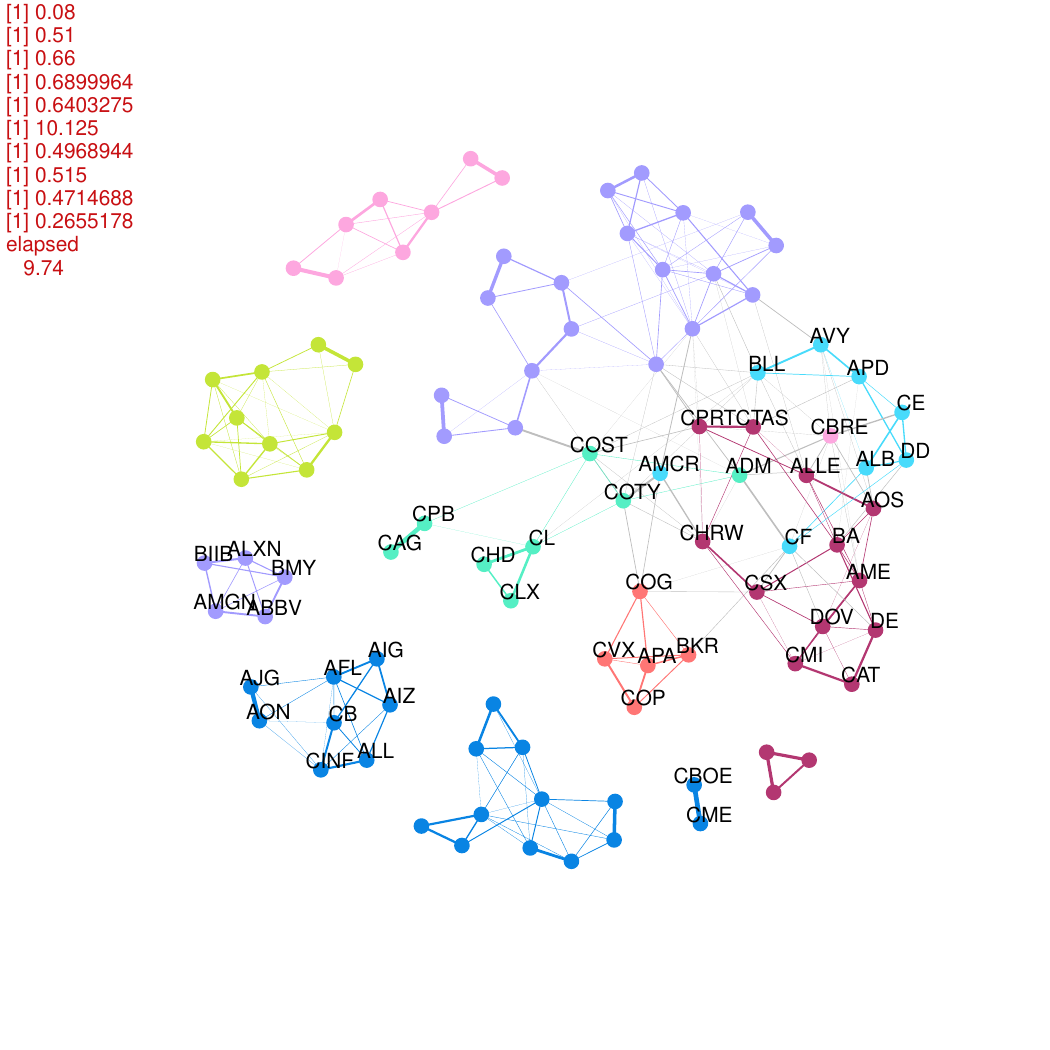}
     \caption{Fingraph method 
\cite{de_miranda_cardoso_graphical_2021}
     }
\end{subfigure} \\
\begin{subfigure}[t]{0.33\textwidth}
   \includegraphics[trim=0 80 0 60,clip,width=\textwidth]
   {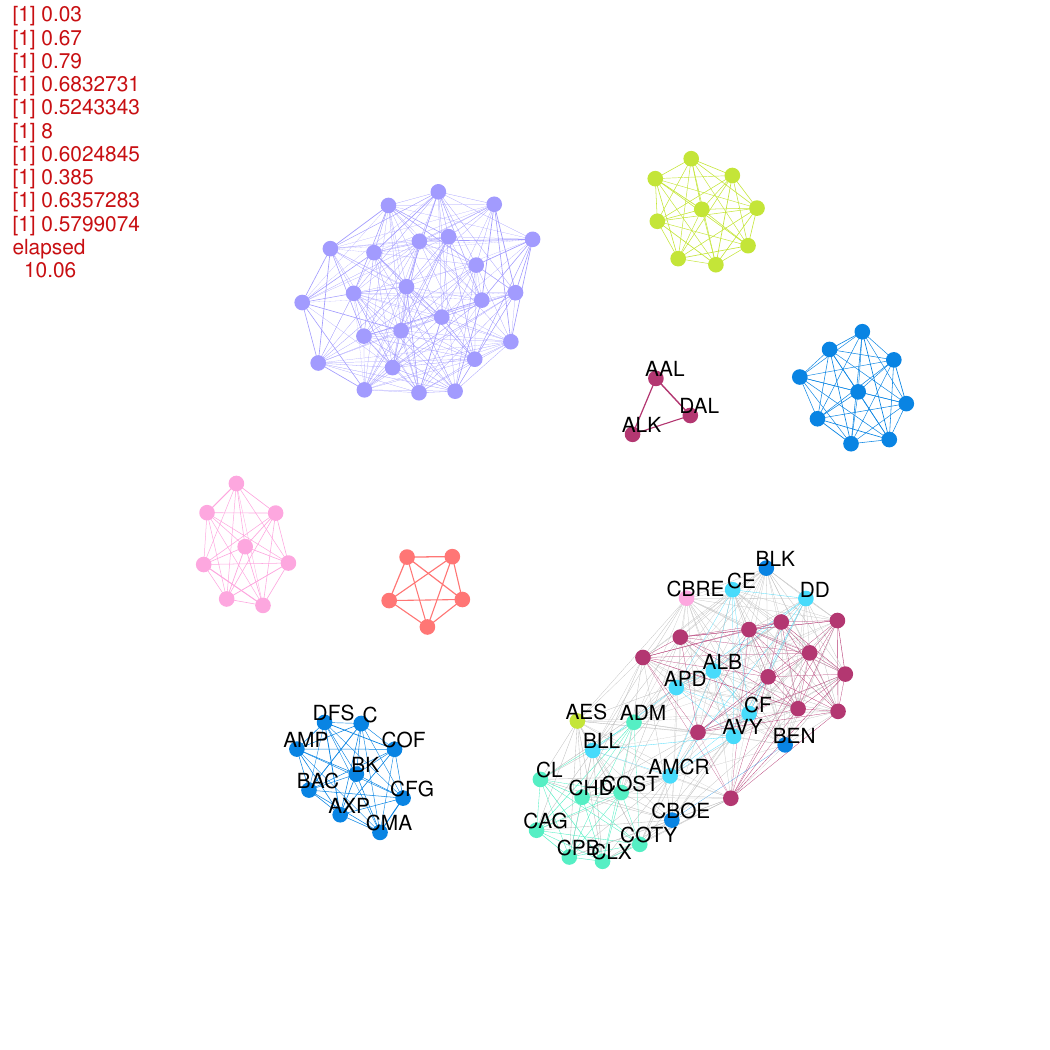}
     \caption{Javaheri et al. \cite{javaheri_graph_2023}  
     }
\end{subfigure} 
\hspace{-30pt}
\begin{subfigure}[t]{0.33\textwidth}
   \includegraphics[trim=0 80 0 60,clip,width=\textwidth]
   {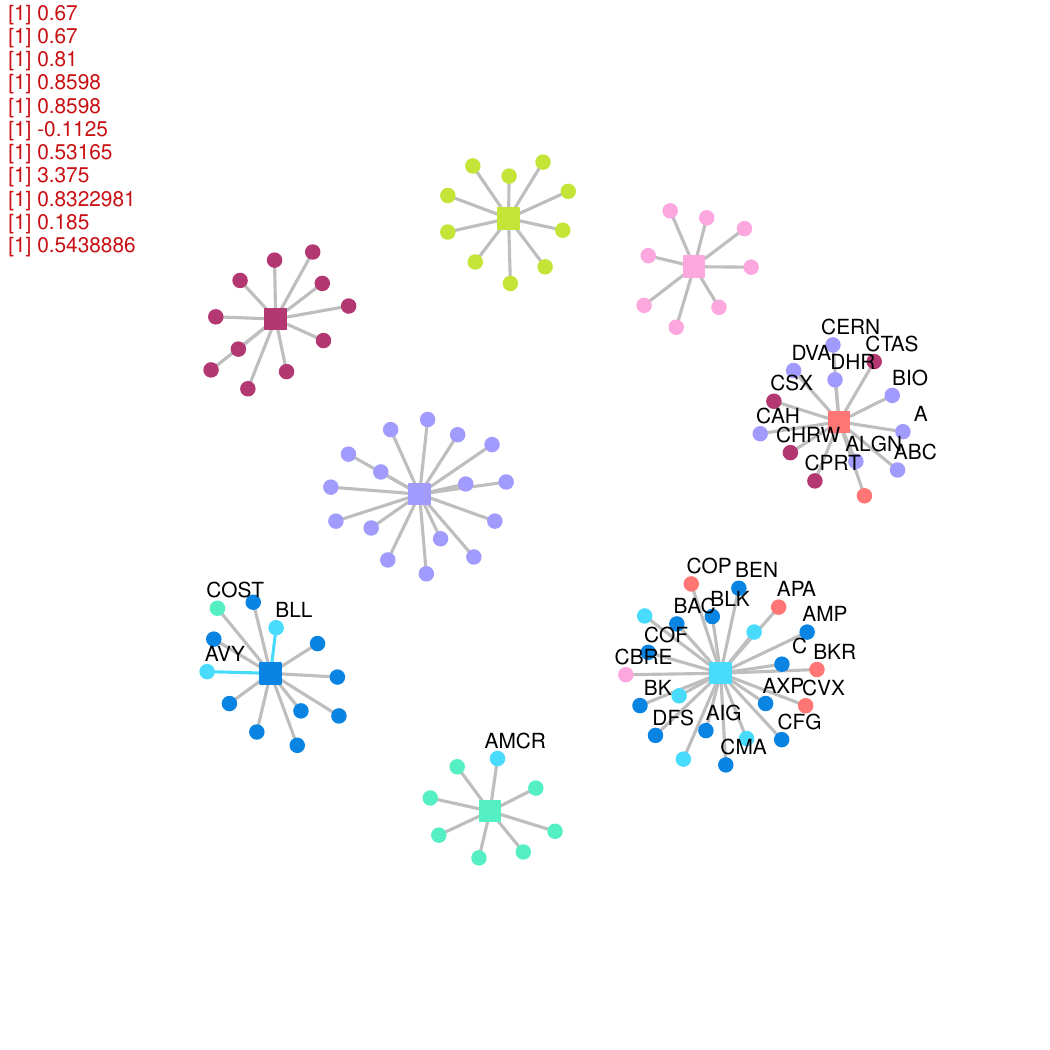}
     \caption{Proposed method with random uniform  $\Ab^0$.}
\end{subfigure} 
\hspace{-30pt}
\begin{subfigure}[t]{0.33\textwidth}
   \includegraphics[trim=0 80 0 60,clip,width=\textwidth]
   {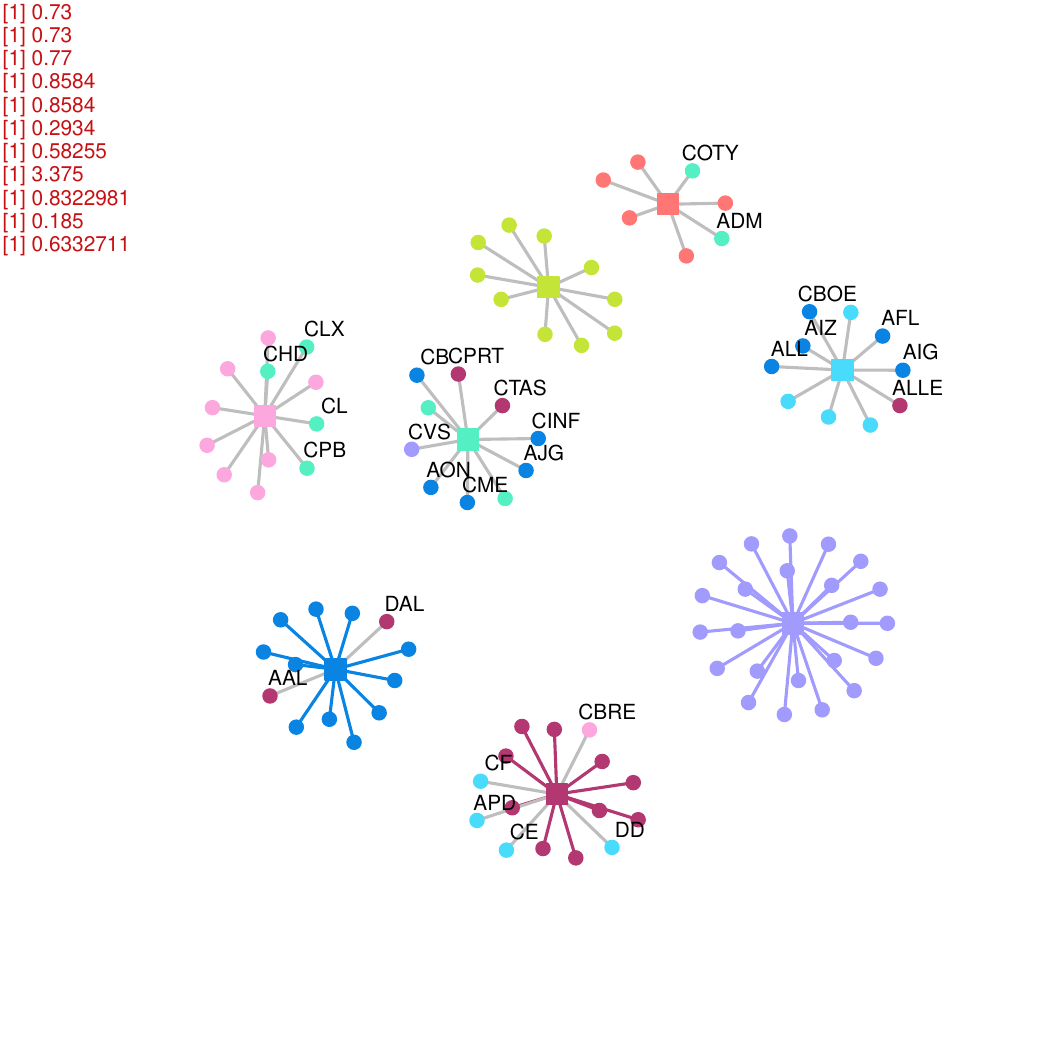}
     \caption{Proposed method with random normal  $\Ab^0$.}
\end{subfigure} 
\hspace{-10pt}
\begin{subfigure}[b]{0.08\textwidth}
 \includegraphics[trim=0 0 0 0,clip,width=\textwidth]{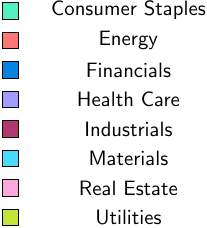}
\end{subfigure} 

\caption{The  graphs learned  from financial data corresponding to the log-returns of 100 stocks in S\&P500 index (including $k=8$ sectors).
}
\label{fig_fingraph_orig}
\end{figure*}

\section{Numerical Results}
\label{sec: Simulation}

In this part, we present numerical results to evaluate the performance of our proposed method for clustering heavy-tailed data. For this purpose, we utilize real-world financial data, specifically the log-returns of S\&P 500 stocks. Our experiment focuses on a subset of 100 stocks, divided into $k = 8$ clusters corresponding to financial sectors, with ground-truth cluster labels defined by the GICS classification standard\footnote{\href{https://www.msci.com/our-solutions/indexes/gics}{https://www.msci.com/our-solutions/indexes/gics}}. The log-returns of these stocks are calculated over a 1000-day period from January 2016 to January 2020. The resulting data matrix $\tilde{\Xb} \in \Real^{r \times n}$ consists of $r = 100$ rows (stocks) and $n = 1000$ columns (days).

To assess clustering performance, we employ accuracy (ACC), purity \cite{everitt_cluster_2011}, modularity (MOD) \cite{newman_modularity_2006}, adjusted Rand index (ARI) \cite{rand_objective_1971}, and Calinski-Harabasz index (CHI) \cite{calinski_dendrite_1974}. Accuracy and purity measure the ratio of true-positive 
labels to $p$. Accuracy considers the optimal alignment of inferred cluster labels to ground truth across all $k!$ permutations, while purity considers the majority label within each cluster as the ground truth. ARI, on the other hand, quantifies the similarity between the true and inferred cluster labels. Modularity also measures how disjoint the nodes with different labels are.
The Calinski-Harabasz index (CHI) is a reference-free criterion
that measures the ratio of the between-cluster separation to the within-cluster separation.

To run our method, we first obtain $\nu$ by
fitting a multivariate Student-\textit{t} distribution to the data using the \texttt{fitHeavyTail} R package\footnote{\href{https://cran.r-project.org/package=fitHeavyTail}{https://CRAN.R-project.org/package/fitHeavyTail}}. 
We then consider two cases for initialization of $\Ab$. In the first case, $\Ab^0$ is sampled from a random uniform $U[0,1]$ distribution. In the other case, the entries of $\Ab^0$ are drawn from the normal distribution $\Nc(0,1)$. We later normalize $\Ab^0$ so that each column has unit sum. The augmented data matrix is then constructed as $\Xb = \left[\tilde{\Xb}^\top \quad \tilde{\Xb}^\top \Ab\right]^\top$. Using this, we compute the initial value of $\Bb^0 = \Pc_{\Omega_\Bb}[(\Xb^\top \Xb / n)^\dagger]_{rk}$, where $\Omega_\Bb$ denotes the set of feasible $\Bb$ matrices. Once the graph is learned, we assign the argument of the maximum entry of the $i$-th row of $\Bb^l$ as the cluster label for node $i$.

We compare our proposed method against several benchmark algorithms, including the constrained Laplacian rank (CLR) method \cite{nie_constrained_2016}, the SGLA method \cite{kumar_unified_2020}, the Fingraph method \cite{cardoso2021graphical}, and the method proposed by Javaheri et al. \cite{javaheri_graph_2023}.

Figure \ref{fig_fingraph_orig} illustrates the resulting graphs where the node colors represent the ground-truth clusters (sector indices).
 The clustering performance associated with these graphs are also given in Table \ref{table:1}. 
As shown in the table, the graph learned by the proposed method with random normal initialization  achieves  the highest   accuracy and the highest ARI, while the result with the uniform initialization gives the highest purity. 
Overall, the proposed method is shown to have superior  performance for financial data clustering.

\begin{table}[!]
\centering
 \caption{Clustering performance of the graphs shown in Fig. \ref{fig_fingraph_orig}}
 \label{table:1}
 \begin{tabular}{m{1cm}||c|c|c|c|c} 
     & ACC  & Purity & MOD & ARI & CHI   \\
     \hline
     
     \multicolumn{1}{c||}{CLR  \cite{nie_constrained_2016} }  & 
      0.61 & 
      0.71 & 
      0.34 &
      0.43 &
      4.73 \\
     \hline
     \multicolumn{1}{c||}{SGLA  \cite{kumar_unified_2020} }  & 
      0.42 & 
      0.44 & 
      0.41 &
     0.12 &
     3.08 \\
     \hline
     \multicolumn{1}{c||}{Fingraph \cite{de_miranda_cardoso_graphical_2021}  }  & 
      0.51 & 
    0.66 & 
    0.64 &
    \textcolor{black}{  0.26} &
   3.48 \\
     \hline
     \multicolumn{1}{c||}{Javaheri et al.  \cite{javaheri_graph_2023}  }  & 
    0.67 & 
    0.79 & 
    0.52 &
    0.57 &
    4.34 \\
    \hline
     \multicolumn{1}{c||}{  \multirow{1}{*}Proposed    (uniform $\Ab^0$)  }  
     &  0.67 
     &  \bf 0.81
     & \bf 0.85
     &  0.53
     & \bf 5.05 \\ 
     \hline
     \multicolumn{1}{c||}{  \multirow{1}{*}Proposed  (normal $\Ab^0$)} 
     & \bf 0.73 
     &   0.77 
      & \bf 0.85
     &  \bf 0.63
     & \bf 5.05\\
     \hline 
 \end{tabular}
 \end{table}

\section{Conclusion}
In this paper, we addressed the problem of learning bipartite $k$-component graphs for clustering data with heavy-tailed distributions, which are common in financial markets. Unlike existing methods that rely on access to data for both cluster centers and members in a bipartite graph model, our proposed approach addresses  this limitation by jointly inferring the  connections and the center nodes of the graph. Numerical experiments highlight the efficiency of the proposed method in clustering heavy-tailed data, particularly data from financial markets.

\appendices

\section{Proof of Lemma \ref{lemma:1}}
\label{App:2}
For simplicity in presentation, let us omit the superscript of the variables. Let $\Mb = \Lb + \frac{1}{\rho}\Yb$, $\mv=\diag(\Mb)$, $\mv_r = \mv_{1:r}$, and $\mv_k = \mv_{r+1:p}$. Then, we may write
\begin{align}
&\Bigl<\Mb,\left[\begin{smallmatrix}
    \Ib_r & -\Bb \\
    -\Bb^\top & \Diag(\Bb^\top \onev_r) 
   \end{smallmatrix}\right] \Bigr> \nonumber \\
   &=\mv_r^\top\onev_r -\langle\Mb_{rk},\Bb\rangle - \langle\Mb_{kr}, \Bb^\top\rangle +\mv_k^\top \Bb^\top \onev_r \nonumber\\
   &=\mv_r^\top\onev_r -\tr\left(\left(\Mb_{rk}^\top +\Mb_{kr}  -\mv_k\onev_r^\top\right) \Bb \right),
\end{align}
and
\begin{align}
    \normop{\left[\begin{smallmatrix}
    \Ib_r & -\Bb \\
    -\Bb^\top & \Diag(\Bb^\top \onev_r) 
   \end{smallmatrix}\right] }_F^2 =
   r + 2\normop{\Bb}_F^2 +  \onev_r^\top \Bb\Bb^\top \onev_r.
\end{align}

Thus, $f_\Bb(\Bb)$ in problem \eqref{eq:B_subprob}, can be simplified as 
\begin{align*}
    f_\Bb(\Bb) =& 
     \tr(\Bb\Rb)+
      \rho \normop{\Bb}_F^2+\frac{\rho }{2} \onev_r^\top \Bb\Bb^\top \onev_r \\
    & + \frac{p+\nu}{n} \sum_i \log \left(1 + \frac{h_i + \tr(\Bb \Gb_i(\Ab))}{\nu}\right),
\end{align*}
where $\Rb = \rho\left(\Mb_{rk}^\top +\Mb_{kr}  -\mv_k\onev_r^\top\right)$.

Now, using the logarithmic inequality, $\log(x)\leq x-1,\,\, \forall x>0$,
one can find an upperbound  as follows:
\begin{align*}
    \frac{p+\nu}{n} \sum_i \log \left(1 + \frac{h_i + \tr(\Bb \Gb_i(\Ab))}{\nu}\right) \leq \tr\left(\Bb\Pb_0\right) + C(\Bb_0),
\end{align*}
with
\begin{align*}
        \Pb_0 &= \frac{p+\nu}{n}   \sum_i \frac{\Gb_i(\Ab) }{h_i + \tr\left(\Bb_0\Gb_i(\Ab)\right) + \nu}. 
\end{align*}
and $C(\Bb_0)$ being a constant term.
Thus, we may propose a majorization function for $f_\Bb(\Bb)$ as
\begin{align*}
    f^S_\Bb(\Bb;\Bb_0) =& 
     \tr(\Bb\Hb)+
      \rho \normop{\Bb}_F^2+\frac{\rho }{2} \onev_r^\top \Bb\Bb^\top \onev_r +C(\Bb_0),
\end{align*}
where $\Hb_0 = \Pb_0 + \Rb$.
Thus, one can obtain \eqref{eq:b_surr_tot} by choosing $\Bb_0 =\Bb^l$.

\section{Simplex Projection Operator}
\label{App:3}
The Lagrangian function corresponding to  problem \eqref{eq:simplex_proj}  is as follows:
\begin{align}
    L(\xv, \alpha,\muv) = \frac{1}{2}\normop{\xv -\xv^0}^2  +\alpha(1 - \langle \xv, \onev\rangle) - \langle\muv,\xv\rangle
\end{align}
Based on the KKT conditions \cite{boyd_convex_2004}, the optimal solution satisfies:
\begin{align}
\begin{split}
    \frac{\partial}{\partial \xv} L(\xv, \alpha,\muv)  = \xv -\xv^0-\alpha\onev - \muv &= \zerov \\
    \langle \xv, \onev\rangle  &= 1\\
    \mu_i \phi_i &= 0  \\
    \phi_i &\geq 0  \\
    \mu_i &\geq 0
    \end{split}
\end{align}
A nearly closed-form solution to this can be obtained via a quick sorting method as follows
\begin{align}
\begin{split}
   \Pc_{\Sc}(\xv) &= \max\{0,\xv + \alpha \onev \} \\
        \alpha &= \max \left\{0,\frac{1-\sum_{i\in \Ic_\alpha}x_i }{\vert \Ic_\alpha\vert}\right\} \\
    \Ic_{\alpha} &= \left\{ i\vert -\alpha\leq  x_i \right\}
    \end{split}
\end{align}

\section{Proof of Lemma \ref{lemma:2}}
\label{App:4}
For simplicity we remove the superscripts in the notations.
Let $g_i(\Ab) = h_i + \tr\left(\Bb\Gb_i(\Ab)\right)$. We then have: 
\begin{align}
    g_i(\Ab) 
    &= h_i + \tr\left(\Bb   \left(-2 \Ab^\top \tilde{\Sb}_i + \diag \left(\Ab^\top \tilde{\Sb}_i \Ab \right)\onev_r^\top\right) \right)\nonumber\\
    &=h_i -2\langle\Ab,\tilde{\Sb}_i\Bb \rangle + \onev_r^\top\Bb \,\diag \left(\Ab^\top \tilde{\Sb}_i \Ab \right).
\end{align}
This can be decomposed in terms of the columns $\av_j$ of $\Ab$ as
\begin{align}
        g_i(\Ab) &=h_i -2\sum_{j=1}^k \langle \av_j, \tilde{\Sb}_i\bv_j\rangle + \sum_{j=1}^k b_j \av_j^\top\tilde{\Sb}_i\av_j,
\end{align}
where $b_j = \onev_r^\top\bv_j$.
Using the logarithmic inequality $\log (x) \leq x-1, \quad \forall x>0$,
we get
\begin{align*}
    \frac{p+\nu}{n}\log \left(1 + \frac{g_i(\Ab)}{\nu}\right)\leq  \frac{p+\nu}{n} \frac{g_i(\Ab)}{g_i(\Ab_0)+\nu} + C(\Ab_0).
\end{align*}
where $\Ab_0$ and $C(\Ab_0)$ are constant.
Hence, $f_\Ab(\Ab)$ in \eqref{eq:A_subprob}
can be majorized as
\begin{align*}
    g_\Ab(\Ab;\Ab_0)= & \,\,\frac{p+\nu}{n}\sum_i \frac{g_i(\Ab)}{g_i(\Ab_0)+\nu} =
    \sum_{j=1}^k g_{\av_j}(\av_j;\Ab_0), 
     \nonumber
\end{align*}
where
\begin{align}
         g_{\av_j}(\av_j;\Ab_0) =& \,\,b_j\av_j^\top   \tilde{\Sb}_0 \av_j -2\, \bv_j^\top \tilde{\Sb}_0\av_j+c_j(\Ab_0),
\end{align}
and
\begin{align}
    \tilde{\Sb}_0 &= \frac{p+\nu}{n}\sum_i \frac{\tilde{\Sb}_i}{h_i + \tr(\Bb\Gb_i(\Ab_0))+\nu}.
\end{align}
Choosing $\Ab_0=\Ab^l$, one yields \eqref{eq:a_surr_tot}.

\bibliography{My_Library}
\bibliographystyle{IEEEbib}


\end{document}